\DeclareMathOperator*{\argmax}{arg\,max}
\newtheorem{theorem}{Theorem}
\newtheorem{lemma}{Lemma}
\newtheorem{definition}{Definition}
\newtheorem{fact}{Fact}
\title{Reliable Robustness Evaluation via Automatically Constructed Attack Ensembles}
\author {
    Shengcai Liu,\textsuperscript{\rm 1,\rm 2}
    Fu Peng, \textsuperscript{\rm 2}
    Ke Tang \textsuperscript{\rm 1,\rm2}\thanks{Corresponding Author.}
}
\begin{document}

\maketitle

\begin{abstract}
Attack Ensemble (AE), which combines multiple attacks together, provides a reliable way to evaluate adversarial robustness.
In practice, AEs are often constructed and tuned by human experts, which however tends to be sub-optimal and time-consuming.
In this work, we present AutoAE, a conceptually simple approach for automatically constructing AEs.
In brief, AutoAE repeatedly adds the attack and its iteration steps to the ensemble that maximizes ensemble improvement per additional iteration consumed.
We show theoretically that AutoAE yields AEs provably within a constant factor of the optimal for a given defense.
We then use AutoAE to construct two AEs for $l_{\infty}$ and $l_2$ attacks, and apply them without any tuning or adaptation to 45 top adversarial defenses on the RobustBench leaderboard.
In all except one cases we achieve equal or better (often the latter) robustness evaluation than existing AEs, and notably, in 29 cases we achieve better robustness evaluation than the best known one.
Such performance of AutoAE shows itself as a reliable evaluation protocol for adversarial robustness, which further indicates the huge potential of automatic AE construction.
Code is available at \url{https://github.com/LeegerPENG/AutoAE}.
\end{abstract}

\section{Introduction}

Deep neural networks (DNNs) have exhibited vulnerability to adversarial examples, which are crafted by maliciously perturbing the original input \citep{SzegedyZSBEGF14}.
Such perturbations are nearly imperceptible to humans but can fool DNNs into producing unexpected behavior, thus raising major concerns about their utility in security-critical applications \citep{LiuLCT22,dai2022saliency}.
Over the past few years, defense strategies against adversarial attacks have attracted rapidly increasing research interest \citep{Machado2021}.
Among them the notable ones include adversarial training \citep{MadryMSTV18} that uses different losses \citep{Carlini017,ZhangW19} and generates additional training data \citep{CarmonRSDL19,AlayracUHFSK19}, as well as provable robustness \citep{RuanWSHKK19,CroceA019,GowalDS2019,ZhangYJXGJ19}.

\begin{figure}[tbp]
	\centering
	\scalebox{0.8}{
		\includegraphics[width=\columnwidth]{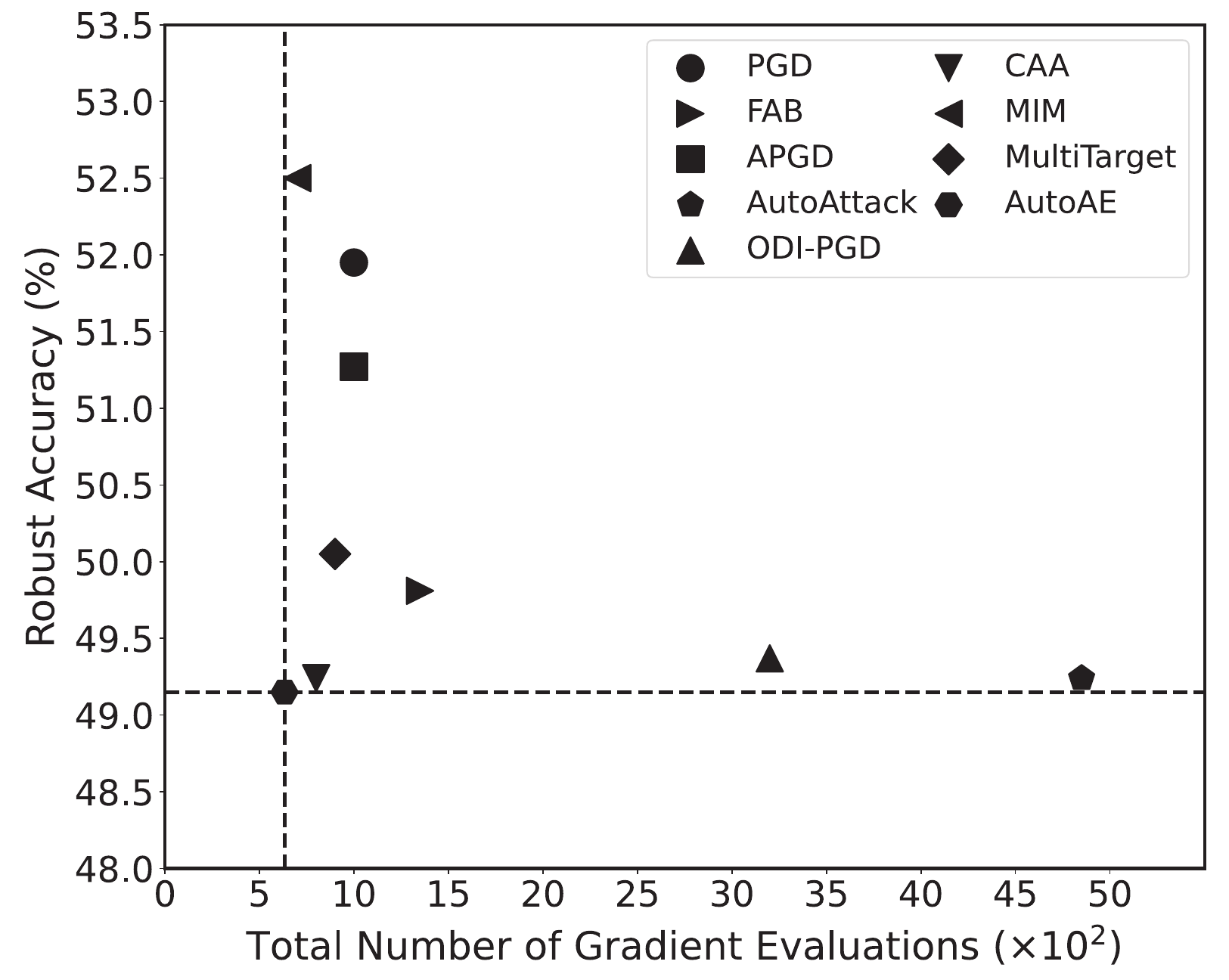}}
	\caption{Comparison of AutoAE and the recently proposed attacks on the CIFAR-10 adversarial training model. AutoAE achieves the best robustness evaluation with only a small number of gradient evaluations.}
	\label{fig:comparison}
\end{figure}

On the other hand, there also exists much evidence showing that many proposed defenses seem to be effective initially but broken later, i.e., exhibiting significant drops of robust accuracy (often by more than 10\%) when evaluated by more powerful or adapted attacks \citep{Carlini017,AthalyeC018}.
With the fast development of this field, it is therefore of utmost importance to reliably evaluate new defenses, such that the really effective ideas can be identified.
Recently, \citet{Croce020a} proposed to assess adversarial robustness using an attack ensemble (AE), dubbed AutoAttack, and has achieved better robustness evaluation for many defenses than the original evaluation.
More specifically, when applied to a defense, an AE would run its component attacks individually, and choose the best of their outputs.
It is thus conceivable that the performance of an AE heavily relies on the complementarity and diversity among its component attacks.
In general, one would choose those attacks with different nature as the component attacks \citep{Croce020a}.
However, given the rich literature of attacks, manual construction of high-quality AEs such as AutoAttack is often a challenging task, requiring domain experts (with deep understanding of both attacks and defenses) to explore the vast design space of AEs, which can be time-consuming and sub-optimal.

The main goal of this work is to develop an easy-to-use approach that not only can reliably evaluate adversarial robustness, but also can significantly reduce the human efforts required in building AEs.
Specifically, we present AutoAE, a simple yet efficient approach for the automatic construction of AEs.
The approach is implemented with a design space containing several candidate attacks; for each attack, there is one hyper-parameter, i.e., iteration steps.
To construct an AE, AutoAE starts from an empty ensemble, and repeatedly adds the candidate attack and its iteration steps to the ensemble that maximizes ensemble improvement per additional iteration spent.

In addition to its conceptual simplicity, AutoAE is appealing also due to its theoretical performance guarantees.
Given a defense model, the AE crafted by AutoAE simultaneously achieves $(1-1/e)$-approximation for the success rate achievable at any given time $t\leq T$ (where $T$ is the total iteration steps consumed by the AE), as well as 4-approximation for the optimal iteration steps.

It is worth mentioning that there also exists another attempt on automatically ensembling attacks.
\citet{MaoCWSHX21} proposed an approach named Composite Adversarial Attack (CAA) that uses multi-objective NSGA-II algorithm to search for attack policies with high attack success rates and low iteration steps.
Here an attack policy also consists of multiple attacks, but differs from AEs in that it is a serial attack connection, where the output of previous attack is used as the initialization input for the successor.
Since the perturbations computed by attacks are accumulated sequentially, it is necessary for CAA to clip or re-scale the perturbations when they exceed the $l_p$-norm constraints.
Note such repair procedure is unnecessary for AutoAE because the component attacks in an AE are run individually.
The main limitation of CAA is that, due to its multi-objective optimization scheme, the algorithm will output a set of non-dominated attack policies;
therefore one needs to select a final policy from these policies by weighting multiple objectives, while the best weighting parameter is difficult to determine.
Besides, for CAA the policy length is also a parameter that needs to be set by users, while in AutoAE, the ensemble size is automatically determined.

We conduct a large-scale evaluation to assess whether AutoAE can reliably evaluate adversarial robustness (see Section~\ref{sec:exp}).
Specifically, we use AutoAE to construct two AEs with a CIFAR-10 $l_{\infty}$- ($\epsilon$=$8/255$) and a CIFAR-10 $l_2$- ($\epsilon$=$0.5$) adversarial training model, and then apply them to 45 top defense models on the RobustBench leaderboard \citep{croce2021robustbench}.
Although using only one restart for each component attack, these two AEs consistently achieve high success rates across various defense models trained on different datasets (CIFAR-10, CIFAR-100 and ImageNet) with different constraints (e.g., $\epsilon$=$8/255$ and $\epsilon=4/255$ for $l_{\infty}$ attack).
In particular, in all except one cases our AEs achieve equal or better (often the latter) robustness evaluation than existing AEs (AutoAttack and CAA); notably, in 29 cases our AEs achieve \textit{new} best known robustness evaluation.
Such performance is achieved by these AEs without any tuning or adaptation to any particular defense at hand, suggesting that they are well suited as minimal tests for any new defense.

We do not argue that AutoAE is the ultimate adversarial attack and this work is not intended for boosting the state-of-the-art attack performance.
As aforementioned, our main purpose is develop a easy-to-use technique for automatic construction of high-quality AEs  that can reliably evaluate adversarial robustness.
As a result, AutoAE can significantly reduce human efforts in AE construction, and also enables researchers to easily obtain their own AEs to meet specific demands.

\section{Preliminaries and Related Works}
\subsection{Adversarial Attack}
\label{sec:preliminary}
Let $\mathcal{F}: [0,1]^D \rightarrow \mathbb{R}^K$ be a $K$-class image classifier taking decisions according to $\argmax_{k=1,...,K}\mathcal{F}_k(\cdot)$ and $x\in[0,1]^D$ be an input image which is correctly classified by $\mathcal{F}$ as $y$.
Given a metric $d(\cdot,\cdot)$ and $\epsilon>0$, the goal of adversarial attack is to find an adversarial example $z$ such that the target model makes wrong predictions on $z$:
\begin{equation}
\label{eq:adv_exam}
\argmax_{k=1,...,K}\mathcal{F}_k(z) \neq y\ \ \text{s.t.}\ \ d(x,z)\leq \epsilon \land z\in[0,1]^D. 
\end{equation}
The problem in Eq.~(\ref{eq:adv_exam}) can be rephrased as maximizing some loss function $L$ to enforce $z$ not to be assisigned to class $y$:
\begin{equation}
\label{eq:loss_function}
\max_{z\in[0,1]^D}L(\mathcal{F}_k(z), y)\ \ \text{s.t.}\ \ d(x,z)\leq \epsilon \land z\in[0,1]^D.
\end{equation}
In image classification, the commonly considered metrics are based on $l_{p}$-distances, i.e., $d(x,z):=||x-z||_p$.
Since the projection on the $l_p$-ball for $p\in\{2,\infty\}$ is available in close form, the Projected Gradient Descent (PGD) attack \citep{MadryMSTV18}, currently the most popular white-box attack, iteratively performs updates on the adversarial example $z$ along the direction of the gradient of loss function.

Hence, the computational complexity of an attack could be characterized by the number of times it computes the gradient of the target model, which typically equals to the number of iteration steps.
In this work, we use the number of gradient evaluations (number of iteration steps) as the complexity metric for attacks and AEs.

\subsection{Existing Attack Ensembles and RobustBench}
\label{sec:related}
This work is mostly inspired by AutoAttack \citep{Croce020a}, which improves the evaluation of adversarial defenses by using an ensemble of four fixed attacks, $\text{APGD}_\text{CE}$, $\text{APGD}_\text{DLR}$, $\text{FAB}$ \citep{Croce020b} and Square Attack \citep{AndriushchenkoC20}.
The key property of AutoAE lies in the diversity among its component attacks with different nature.
In this work, we improve the manual construction of AutoAttack to an automatic construction scheme AutoAE, where the attacks in the former are used to implement the design space (see Section~\ref{exp_constructAE}).

As aforementioned, CAA \citep{MaoCWSHX21} is a multi-objective optimization based approach for automatically ensembling attacks in a serial connection, i.e., attack policy.
In spirit, AutoAE and CAA are similar because they both seek to achieve a high success rate with low iteration steps.
In particular, CAA optimizes these two values as separate objectives, while AutoAE optimizes them simultaneously by maximizing the success rate gain per additional iteration spent.
Compared to CAA, AutoAE is simpler and more straightforward, with theoretical performance guarantees on both success rate and iteration steps.
In practice, AutoAE is easier to use than CAA because the latter requires users to set objective-weighting parameter and policy length.
Besides, the experiments show that AutoAE is more efficient than CAA, i.e., it consumes fewer GPU hours to construct AEs with stronger performance (see Figure~\ref{fig:per_progress}).

In this work, we use RobustBench \citep{croce2021robustbench} as the testbed, which is a standardized adversarial robustness benchmark that maintains a leaderboard based on more than 120 defense models.
It provides a convenient way to track the progress in this field.
For each defense, the leaderboard presents the robust test accuracy assessed by AutoAttack and its best known accuracy.

Finally, this work is also inspired by the recent advances in AutoML, such as in the domain of neural architecture search \citep{ElskenMH19}.
These automation techniques can effectively help people get rid of the tedious process of architecture design.
This is also true in our work, where searching for high-quality AEs can help better evaluate adversarial robustness.

\section{Methods}
We first formulate the AE construction problem and then detail the AutoAE approach, as well as the theoretical analysis.
\subsection{Problem Formulation}
\label{sec:prob_form}
Given a classifier $\mathcal{F}$ and an annotated training set $D$, where $\mathcal{F}$ can correctly classify the instances in $D$.
Suppose we have a candidate attack pool $\mathbb{S}=\{\mathcal{A}_1,...,\mathcal{A}_n\}$, which can be constructed by collecting existing attacks.
The goal is to automatically select attacks from $\mathbb{S}$ to form an AE with strong performance.
On the other hand, as a common hyper-parameter in attacks, the iteration steps can have a significant impact on the attack performance \citep{MadryMSTV18}.
Hence, we set an upper bound $\mathcal{T}$ on the total iteration steps of all attacks in the AE, and also integrate the attack's iteration steps into problem formulation.
Concretely, the AE, denoted as $\mathbb{A}$, is a list of $\langle\text{attack, iteration steps}\rangle$ pairs, i.e., $\mathbb{A}=[\langle \mathcal{A}_1, t_1 \rangle,...,\langle \mathcal{A}_m,t_m \rangle ]$, where $\mathcal{A}_i \in \mathbb{S}$, $t_i \in\mathbb{Z}^+$ is the iteration steps of $\mathcal{A}_i$, $m$ is the ensemble size and $\sum_{i=1}^m t_i \leq \mathcal{T}$.

We denote the set of all possible pairs by $W$, i.e., $W=\{\langle \mathcal{A}, t \rangle|\mathcal{A}\in\mathbb{S} \land t\in \mathbb{Z}^+ \land t\leq \mathcal{T}\}$, and denote 
the AE that results from appending (adding) a pair $\langle \mathcal{A}, t \rangle$ and another AE $\mathbb{A}'$ to $\mathbb{A}$ by $\mathbb{A}\oplus \langle \mathcal{A}, t \rangle$ and $\mathbb{A}\oplus \mathbb{A}'$, respectively.
Generally, an attack $\mathcal{A}$ can be considered as an operation that transforms an input image $x$ to an adversarial example $z$ which is within the $\epsilon$-radius $l_p$ ball around $x$ (see Section~\ref{sec:preliminary}), i.e., $\mathcal{A}: [0,1]^D \rightarrow \{ z|z \in [0,1]^D \land ||z-x||_p \leq \epsilon \}$ .
Supposing $\mathcal{A}$ runs with iteration steps $t$ to attack $\mathcal{F}$ on the instance $\pi=(x,y) \in D$ and generates an adversarial example $z$, we define $Q\left( \langle \mathcal{A},t \rangle,\pi\right)$ as an indicator function of whether $z$ can successfully fool the target model:
\begin{equation*}
 Q\left( \langle \mathcal{A},t \rangle, \pi \right) := \mathbb{I}_{\argmax_{k=1,...,K}\mathcal{F}_k(z) \neq y}(z).
\end{equation*}
When using an ensemble $\mathbb{A}=[\langle \mathcal{A}_1, t_1 \rangle,...,\langle \mathcal{A}_m,t_m \rangle ]$ to attack $\mathcal{F}$ on $\pi$, all the attacks in $\mathbb{A}$ are run individually and the best of their outputs is returned.
In other words, the performance of $\mathbb{A}$ on $\pi$, denoted as $Q(\mathbb{A},\pi)$, is:
\begin{equation*}
	Q(\mathbb{A},\pi)=\max_{i=1,...,m} Q\left( \langle \mathcal{A}_i,t_i \rangle, \pi \right).
\end{equation*}
Then the success rate of $\mathbb{A}$ for attacking $\mathcal{F}$ on dataset $D$, denoted as $Q(\mathbb{A},D)$, is:
\begin{equation}
	\label{eq:Q}
	Q(\mathbb{A},D)=\frac{1}{|D|}\sum_{\pi \in D}Q(\mathbb{A},\pi).
\end{equation}
Note $Q(\mathbb{A},D)$ is always 0 when $\mathbb{A}$ is empty.
To make $Q(\cdot,\cdot)$ deterministic, in this work we fix the random seeds of all attacks in $\mathbb{S}$ to keep their outputs stable.

Another common concern of an attack is the number of iteration steps, which indicates the computational complexity of it.
Ideally, an AE should achieve the best possible success rate (i.e., 100\%) using as few iteration steps as possible.
Let $\mathbb{A}(t)$ denote the AE resulting from truncating $\mathbb{A}$ at iteration steps $t$, e.g., for $\mathbb{A}=[\langle \mathcal{A}_1,4\rangle, \langle\mathcal{A}_2,4\rangle]$, $\mathbb{A}(5)=[\langle \mathcal{A}_1,4\rangle, \langle\mathcal{A}_2,1\rangle]$.
We use the following expected iteration steps (denoted as $C(\mathbb{A},D)$) needed by $\mathbb{A}$ to achieve 100\% success rate on dataset $D$:
\begin{equation}
	\label{eq:C}
	C(\mathbb{A},D)= \sum_{t=0}  1-Q(\mathbb{A}(t),D).
\end{equation}
Finally, the AE construction problem is presented in Definition~\ref{def:prob}.
Note the ensemble size is not predefined and should be determined by the construction algorithm.
Since $D$ is fixed for $Q(\cdot,D)$ and $C(\cdot,D)$, for notational simplicity, henceforth we omit the $D$ in them and directly use $Q(\cdot)$ and $C(\cdot)$.
The problem in Definition~\ref{def:prob} is NP-hard, because maximizing only $Q$ is equivalent to solving the NP-hard subset selection problem with general cost constraints \citep{NemhauserW78,QianSYT17,QIAN2019279}. 
\begin{definition}
	\label{def:prob}
	Given $\mathcal{F}$, $D$, $\mathbb{S}$ and $\mathcal{T}$, the AE construction problem is to find $\mathbb{A}^*=(\langle \mathcal{A}^*_1, t^*_1 \rangle,...,\langle \mathcal{A}^*_{m^*},t^*_{m^*} \rangle )$ that maximizes $Q(\mathbb{A}^*,D)$ and minimizes $C(\mathbb{A}^*,D)$, s.t. $\langle \mathcal{A}^*_i,t^*_i\rangle \in W$, $m^* \in \mathbb{Z}^+$, and $\sum_{i=1}^{m^{*}} t^*_i \leq \mathcal{T}$.
\end{definition}

\subsection{AutoAE: Automatic AE Construction}
\label{sec:alg}
We now introduce AutoAE, a simple yet efficient approach for automatically constructing AEs.

As shown in Algorithm~\ref{alg:autoae}, it starts with an empty ensemble $\mathbb{A}$ (line 1) and repeatedly finds the attack $\mathcal{A}^*$ and its iteration steps $t^*$ that, if included in $\mathbb{A}$, maximizes ensemble improvement per additional iteration spent (line 3).
Here, ties are broken by choosing the attack with the fewest iteration steps.
After this, $\langle \mathcal{A}^*, t^* \rangle$ is subject to the following procedure:
if adding it to $\mathbb{A}$ does not improve the success rate of the latter (which means the algorithm has converged), or results in a larger number of used iterations steps than ${\mathcal{T}}$, AutoAE will terminate and return $\mathbb{A}$ (line 4);
otherwise $\langle \mathcal{A}^*, t^* \rangle$ is appended to $\mathbb{A}$ (line 5).
For the AE returned by AutoAE, we can further shrink (simplify) it without impairing its performance.
Specifically, we remove all pairs $\langle \mathcal{A},t \rangle \in \mathbb{A}$ satisfying $\exists \langle \mathcal{A}',t' \rangle \in \mathbb{A}$ such that $\mathcal{A}=\mathcal{A}' \land t\leq t'$.
This procedure can effectively reduce the computational complexity of $\mathbb{A}$, especially when $|\mathbb{S}|$ is small (see Section~\ref{exp_constructedAE}).

\begin{algorithm}[tbp]
	\LinesNumbered
	\KwIn{classifier $\mathcal{F}$, annotated training set $D$, candidate attack pool $\mathbb{S}$, maximum total iteration steps $\mathcal{T}$}
	\KwOut{$\mathbb{A}$}
	$\mathbb{A} \leftarrow []$, $t_{used} \leftarrow 0$;\\
	\While{\textbf{true}}
	{		Let $\langle \mathcal{A}^*, t^* \rangle \leftarrow \argmax\limits_{\langle \mathcal{A}, t \rangle \in W} \frac{Q(\mathbb{A} \oplus \langle \mathcal{A}, t \rangle)-Q(\mathbb{A})}{t}$;\\
		\lIf{$Q(\mathbb{A} \oplus \langle \mathcal{A}^*, t^* \rangle) = Q(\mathbb{A})$ \textbf{or} $t_{used}+t^* > \mathcal{T}$}{return $\mathbb{A}$}
		$\mathbb{A} \leftarrow \mathbb{A} \oplus \langle \mathcal{A}^*, t^* \rangle $;\\
		$t_{used} \leftarrow t_{used}+t^*$;\\
	}
	\Return{$\mathbb{A}$}
	\caption{AutoAE}
	\label{alg:autoae}
\end{algorithm}

The main computational costs of AutoAE are composed of the function queries to $Q(\cdot)$ (line 3).
Specifically, each iteration of the algorithm will query $Q(\cdot)$ for $O(n{\mathcal{T}})$ times.
In the worst case, AutoAE will run for $\mathcal{T}$ iterations where in each iteration $t_{used}$ increases only by 1.
Hence, the total number of function queries consumed by AutoAE is $O(n{\mathcal{T}}^2)$.
In practice, we can discretize the range of iteration steps into a few uniform-spacing values to largely reduce the number of function queries;
furthermore, we can run the attacks in $\mathbb{S}$ on $D$ in advance to collect their performance data, which can be used to directly compute the value of $Q(\cdot)$ when it is being queried, instead of actually running the ensembles.

\subsection{Theoretical Analysis}
\label{sec:theory}
We now theoretically analyze the performance of AutoAE.
Let $V$ be the set of all possible AEs that can be constructed based on $W$.
Define $\ell(\mathbb{A})$ the total iteration steps consumed by $\mathbb{A}$, i.e., $\ell(\mathbb{A})=\sum_{\langle \mathcal{A}, t \rangle \in \mathbb{A}}t$.
Our analysis is based on the following fact.
\begin{fact}
\label{fact:sub}
$Q(\cdot)$ is a monotone submodular function, i.e., for any $\mathbb{A},\mathbb{A}' \in V$ and any $\langle \mathcal{A},t\rangle \in W$, it holds that $Q(\mathbb{A}) \leq Q(\mathbb{A} \oplus \mathbb{A}')$, and $Q\left(\mathbb{A} \oplus \mathbb{A}' \oplus\langle \mathcal{A},t\rangle\right)-Q\left(\mathbb{A} \oplus \mathbb{A}'\right) \leq Q\left(\mathbb{A} \oplus\langle \mathcal{A},t\rangle\right)-Q\left(\mathbb{A}\right)$.
\end{fact}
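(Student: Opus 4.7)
The plan is to reduce Fact 1 to a pointwise statement about each data point and then lift it back to the average. The first step is to unpack the definition: by Eq.~(\ref{eq:Q}), $Q(\mathbb{A})=\frac{1}{|D|}\sum_{\pi\in D}Q(\mathbb{A},\pi)$ is a nonnegative convex combination of the per-instance quantities $Q(\cdot,\pi)$. Since both monotonicity and submodularity are preserved under nonnegative linear combinations, it suffices to prove, for every fixed $\pi\in D$, that $\mathbb{A}\mapsto Q(\mathbb{A},\pi)$ is monotone and submodular with respect to the append operation $\oplus$.

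Next, I would exploit the very simple structure of $Q(\mathbb{A},\pi)=\max_i Q(\langle\mathcal{A}_i,t_i\rangle,\pi)$. Because each summand indicator takes values in $\{0,1\}$, this max is itself a $\{0,1\}$-valued quantity which equals $1$ precisely when at least one pair in $\mathbb{A}$ produces an adversarial example that fools $\mathcal{F}$ on $\pi$. Let me call $\pi$ ``covered'' by $\mathbb{A}$ in that case. Monotonicity then follows immediately: if $\pi$ is covered by $\mathbb{A}$, the witnessing pair still lies in $\mathbb{A}\oplus\mathbb{A}'$, so $\pi$ is covered by $\mathbb{A}\oplus\mathbb{A}'$ and $Q(\mathbb{A},\pi)\le Q(\mathbb{A}\oplus\mathbb{A}',\pi)$.

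For submodularity at the fixed instance $\pi$, I would split into cases according to whether $\pi$ is already covered by the larger prefix $\mathbb{A}\oplus\mathbb{A}'$. If it is, the left-hand marginal $Q(\mathbb{A}\oplus\mathbb{A}'\oplus\langle\mathcal{A},t\rangle,\pi)-Q(\mathbb{A}\oplus\mathbb{A}',\pi)$ equals $0$, while by monotonicity the right-hand marginal $Q(\mathbb{A}\oplus\langle\mathcal{A},t\rangle,\pi)-Q(\mathbb{A},\pi)\in\{0,1\}$ is at least $0$, so the inequality holds. If $\pi$ is not covered by $\mathbb{A}\oplus\mathbb{A}'$, then a fortiori it is not covered by $\mathbb{A}$, so both marginals reduce to the same indicator $Q(\langle\mathcal{A},t\rangle,\pi)$, and the inequality holds with equality. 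Averaging these pointwise inequalities over $\pi\in D$ yields the claimed inequality for $Q(\cdot)$, and likewise for monotonicity.

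There is no real obstacle here; the only mildly delicate point is that $\mathbb{A}$ is a \emph{list} rather than a set, so one should be mindful that $Q$ is invariant under reordering and under repeated occurrences of the same pair, which is immediate from the $\max$ form. Because this invariance reduces the append operation to the ``cover one more element'' behavior of the classical max-coverage function, the standard submodularity argument goes through verbatim, and Fact~\ref{fact:sub} follows.
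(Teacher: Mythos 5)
Your proof is correct and follows essentially the same route as the paper's: reduce to the per-instance $\{0,1\}$-valued quantity, get monotonicity from the fact that the max over a longer list dominates, and get submodularity by a two-case analysis (your split on whether $\pi$ is covered by $\mathbb{A}\oplus\mathbb{A}'$ is just a reorganization of the paper's split on the value of the right-hand marginal). If anything, your version spells out the averaging step and the ``not covered by $\mathbb{A}\oplus\mathbb{A}'$ implies not covered by $\mathbb{A}$'' implication more explicitly than the paper does.
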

\begin{proof}
	By definition of $Q$, for any given instance $\pi$, we have
	$Q(\mathbb{A}\oplus \mathbb{A}', \pi) =\max_{\langle \mathcal{A},t \rangle \in  \mathbb{A}\oplus \mathbb{A}'} Q\left( \langle \mathcal{A},t \rangle, \pi \right) \geq \max_{\langle \mathcal{A},t \rangle \in  \mathbb{A}} Q\left( \langle \mathcal{A},t \rangle, \pi \right)= Q(\mathbb{A},\pi)$. Thus the monotonicity holds.
	
	To prove submodulariy, we first notice that for any given instance $\pi$, if $Q\left(\mathbb{A} \oplus\langle \mathcal{A},t\rangle, \pi\right)-Q\left(\mathbb{A}, \pi\right) = 1$, then $Q\left(\mathbb{A} \oplus \mathbb{A}' \oplus\langle \mathcal{A},t\rangle\right)-Q\left(\mathbb{A} \oplus \mathbb{A}'\right) \leq 1$;
	on the other hand, if $Q\left(\mathbb{A} \oplus\langle \mathcal{A},t\rangle, \pi\right)-Q\left(\mathbb{A}, \pi\right) = 0$, then it must hold that $Q\left(\mathbb{A} \oplus \mathbb{A}' \oplus\langle \mathcal{A},t\rangle\right)-Q\left(\mathbb{A} \oplus \mathbb{A}'\right) = 0$.
	In either case, $Q\left(\mathbb{A} \oplus\langle \mathcal{A},t\rangle, \pi\right)-Q\left(\mathbb{A}, \pi\right) \geq Q\left(\mathbb{A} \oplus \mathbb{A}' \oplus\langle \mathcal{A},t\rangle\right)-Q\left(\mathbb{A} \oplus \mathbb{A}'\right)$. Thus the submodularity holds.
\end{proof}
Intuitively, $Q$ exhibits a so-called diminishing returns property \citep{qian2015subset} that the marginal gain of adding $\langle \mathcal{A},t\rangle$ diminishes as the ensemble size increases.
Based on Fact~\ref{fact:sub}, we can establish the following useful lemma that states in terms of the increase in $Q$ per additional iteration step spent, the results from appending an AE to the current ensemble are always upper bounded by the results from appending the best $\langle \mathcal{A},t\rangle$ pair to it.
\begin{lemma}
\label{lem:incre}
For any $\mathbb{A},\mathbb{A}' \in V$, it holds that
\begin{equation*}
	\frac{Q\left(\mathbb{A} \oplus \mathbb{A}'\right)-Q\left(\mathbb{A}\right)}{\ell(\mathbb{A}')} \leq \max _{\langle\mathcal{A},t \rangle}\left\{\frac{Q\left(\mathbb{A} \oplus\langle\mathcal{A},t \rangle\right)-Q\left(\mathbb{A}\right)}{t}\right\}.
\end{equation*}
\end{lemma}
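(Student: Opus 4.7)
The plan is to prove Lemma~\ref{lem:incre} by a standard telescoping argument combined with the submodularity property of $Q$ established in Fact~\ref{fact:sub}. The high-level idea is that the increment gained by appending the whole ensemble $\mathbb{A}'$ can be decomposed into a sum of per-pair increments, each of which is dominated by the increment obtained by appending that single pair directly to $\mathbb{A}$.

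First I would enumerate $\mathbb{A}' = [\langle \mathcal{A}'_1, t'_1\rangle, \ldots, \langle \mathcal{A}'_k, t'_k\rangle]$ and denote by $\mathbb{A}'_{<i}$ the prefix $[\langle \mathcal{A}'_1, t'_1\rangle, \ldots, \langle \mathcal{A}'_{i-1}, t'_{i-1}\rangle]$, with $\mathbb{A}'_{<1}$ being the empty ensemble. Then I would write the telescoping identity
\begin{equation*}
Q(\mathbb{A} \oplus \mathbb{A}') - Q(\mathbb{A}) = \sum_{i=1}^{k} \bigl[ Q(\mathbb{A} \oplus \mathbb{A}'_{<i} \oplus \langle \mathcal{A}'_i, t'_i\rangle) - Q(\mathbb{A} \oplus \mathbb{A}'_{<i}) \bigr].
\end{equation*}
Applying the submodularity inequality from Fact~\ref{fact:sub} to each summand (with the prefix $\mathbb{A}'_{<i}$ playing the role of $\mathbb{A}'$ in that fact) yields
\begin{equation*}
Q(\mathbb{A} \oplus \mathbb{A}') - Q(\mathbb{A}) \;\le\; \sum_{i=1}^{k} \bigl[ Q(\mathbb{A} \oplus \langle \mathcal{A}'_i, t'_i\rangle) - Q(\mathbb{A}) \bigr].
\end{equation*}

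Next, I would upper-bound every term on the right by the maximum per-iteration increment. Letting $M := \max_{\langle \mathcal{A}, t\rangle}\{(Q(\mathbb{A} \oplus \langle \mathcal{A}, t\rangle) - Q(\mathbb{A}))/t\}$, each summand is at most $t'_i \cdot M$, so
\begin{equation*}
Q(\mathbb{A} \oplus \mathbb{A}') - Q(\mathbb{A}) \;\le\; M \cdot \sum_{i=1}^{k} t'_i \;=\; M \cdot \ell(\mathbb{A}').
\end{equation*}
Dividing both sides by $\ell(\mathbb{A}')$ (which is strictly positive, since iteration counts are positive integers) gives exactly the claimed inequality. The empty $\mathbb{A}'$ case can be dismissed because the left-hand side is then $0/0$ and the statement is vacuous, or handled by restricting attention to nonempty $\mathbb{A}'$.

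I do not anticipate a serious obstacle here; the only mildly delicate point is making sure the submodularity inequality in Fact~\ref{fact:sub} is applied in the correct form, with the growing prefix $\mathbb{A}'_{<i}$ appearing in the role of the ``extra'' ensemble so that the marginal gain of $\langle \mathcal{A}'_i, t'_i\rangle$ is bounded by its gain over the smaller base $\mathbb{A}$. Once that alignment is made, the argument is essentially the classical proof that the marginal value of any subset under a submodular function is at most the sum of the individual marginal values.
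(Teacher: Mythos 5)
Your proposal is correct and follows essentially the same route as the paper's own proof: a telescoping decomposition of $Q(\mathbb{A}\oplus\mathbb{A}')-Q(\mathbb{A})$ over the pairs of $\mathbb{A}'$, term-wise application of the submodularity in Fact~\ref{fact:sub} with the prefix in the role of the intermediate ensemble, and bounding each marginal gain by $t'_i$ times the maximum per-iteration increment. Your write-up is in fact slightly cleaner, since it makes explicit the prefix notation and the nonemptiness of $\mathbb{A}'$ that the paper leaves implicit.
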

\begin{proof}
	Let $r$ denote the right hand side of the inequality. Let $\mathbb{A} = [ a_1,a_2,\dots,a_L ]$, where $a_l=\langle \mathcal{A}_l,t_l \rangle$. Let $\Delta_l = Q(\mathbb{A} \oplus [a_1,a_2,\dots,a_l])- Q(\mathbb{A} \oplus [a_1,a_2,\dots,a_{l-1}])$.
	We have
	\begin{align*}
		&Q(\mathbb{A} \oplus \mathbb{A}') =Q(\mathbb{A})+\sum\limits_{l=1}^{L}\Delta_{l}     \qquad \text{(telescoping series)}\\
		&\leq Q(\mathbb{A})+\sum\limits_{l=1}^{L}(Q(\mathbb{A} \oplus a_l)-Q(\mathbb{A}))\qquad \text{(submodularity)}\\
		&\leq Q(\mathbb{A})+\sum\limits_{l=1}^{L}r\cdot\tau_l \qquad \text{(definition of $r$ )}\\
		&=Q(\mathbb{A})+r\cdot \ell(\mathbb{A}').
	\end{align*}
	Rearranging this inequality gives $\frac{Q(\mathbb{A}\oplus \mathbb{A}')-Q(\mathbb{A})}{\ell(\mathbb{A}')}\leq r$, as claimed.
\end{proof}
We now present the approximation bounds of AutoAE on maximizing $Q(\cdot)$ in Eq.~(\ref{eq:Q}) and minimizing $C(\cdot)$ in Eq.~(\ref{eq:C}), in Theorem~\ref{the:Q} and Theorem~\ref{the:C}, respectively.
\begin{theorem}
\label{the:Q}
Let $\mathbb{A}=[\langle \mathcal{A}_1,t_1\rangle,\langle \mathcal{A}_2,t_2\rangle,...,\langle \mathcal{A}_m,t_m\rangle]$ be the AE returned by AutoAE, and let $T = \ell(\mathbb{A})$.
It holds that $Q(\mathbb{A}) \geq (1-\frac{1}{e})\max_{\mathbb{A}'\in V}Q(\mathbb{A}'(T))$, where $\mathbb{A}'(T)$ is the AE resulting from truncating $\mathbb{A}'$ at iteration steps $T$.
In other words, $\mathbb{A}$ achieves $(1-1/e)$-approximation for the optimal success rate achievable at any given iteration steps $t\leq T$. 
\end{theorem}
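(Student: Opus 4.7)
The plan is to adapt the classical greedy analysis for budgeted monotone submodular maximization to our setting, using Fact 1 and Lemma 1 as the only nontrivial ingredients. Fix any $\mathbb{A}' \in V$ and set $\mathbb{A}^{*} = \mathbb{A}'(T)$, so that $\ell(\mathbb{A}^{*}) \leq T$; it then suffices to show $Q(\mathbb{A}) \geq (1-1/e)\, Q(\mathbb{A}^{*})$ and take the maximum over $\mathbb{A}'$ at the end. I would index the greedy iterates by $\mathbb{A}_0 = []$ and $\mathbb{A}_k = \mathbb{A}_{k-1} \oplus \langle \mathcal{A}_k, t_k \rangle$, so that $\mathbb{A}_m = \mathbb{A}$ and $\sum_{k=1}^{m} t_k = T$.

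The central step is a per-round recurrence on the residual $\delta_k := Q(\mathbb{A}^{*}) - Q(\mathbb{A}_k)$. Applying Lemma 1 with its two arguments set to $\mathbb{A}_{k-1}$ and $\mathbb{A}^{*}$, the average per-iteration gain from appending $\mathbb{A}^{*}$ is at most the greedy-maximal per-iteration gain, which AutoAE realizes by its choice of $\langle \mathcal{A}_k, t_k \rangle$; combined with monotonicity $Q(\mathbb{A}_{k-1} \oplus \mathbb{A}^{*}) \geq Q(\mathbb{A}^{*})$, this rearranges to
\begin{equation*}
\delta_k \;\leq\; \delta_{k-1}\left(1 - \frac{t_k}{\ell(\mathbb{A}^{*})}\right).
\end{equation*}
I would then unroll this recurrence from $k=1$ to $m$ with $\delta_0 = Q(\mathbb{A}^{*})$, apply $1-x \leq e^{-x}$ termwise, and invoke $\sum_{k} t_k = T \geq \ell(\mathbb{A}^{*})$ to obtain $\delta_m \leq Q(\mathbb{A}^{*})\exp\!\bigl(-T/\ell(\mathbb{A}^{*})\bigr) \leq Q(\mathbb{A}^{*})/e$, which is exactly the claimed bound.

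The one place I expect to need a little care is reconciling the two termination conditions on line 4 of Algorithm~1 with this recurrence. If AutoAE halts because the best candidate pair yields zero marginal gain, then a second application of Lemma~1 forces $Q(\mathbb{A} \oplus \mathbb{A}^{*}) = Q(\mathbb{A})$, and monotonicity then gives $Q(\mathbb{A}) \geq Q(\mathbb{A}^{*})$, which is strictly better than $(1-1/e)$-approximation. If instead it halts because $t_{used} + t^{*} > \mathcal{T}$, no modification is needed: we still have $T = \ell(\mathbb{A})$ by definition and $\ell(\mathbb{A}^{*}) \leq T$ by truncation, so the telescoped product still collapses to at most $e^{-1}$. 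Hence both stopping modes fit the same template and the claimed bound follows.
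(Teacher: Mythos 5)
Your proposal is correct and follows essentially the same route as the paper's proof: both derive the geometric-decay recurrence on the residual $Q(\mathbb{A}^{*})-Q(\mathbb{A}_k)$ from monotonicity plus Lemma~1 applied to the greedy iterate, then unroll and bound the product by $1/e$. The only cosmetic differences are that you keep $\ell(\mathbb{A}^{*})$ in the denominator (slightly tighter than the paper's $T$) and bound the product via $1-x\leq e^{-x}$ rather than the paper's equal-split argument giving $(1-1/m)^m<1/e$.
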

\begin{proof}
Let $Q^{*} = \max_{\mathbb{A}'\in V}Q(\mathbb{A}'(T))$.
For integer $j\in [1,m]$, define $\mathbb{A}_j=[\langle \mathcal{A}_1,t_1\rangle,...,\langle \mathcal{A}_{j-1},t_{j-1}\rangle]$, i.e., the AE at the beginning of the $j$-iteration of AutoAE.
Define $Q_j=Q(\mathbb{A}_j)$ (note $Q_1=0$) and $\Delta_j=Q^*-Q_j$.
Then let $s_j=\max_{\langle \mathcal{A}, t \rangle \in W} \frac{Q(\mathbb{A}_j \oplus \langle \mathcal{A}, t \rangle)-Q_j}{t}$, i.e., the maximum value obtained at the $j$-iteration in line 3 of Algorithm~\ref{alg:autoae}.
It is straightforward to verify that $s_j=(\Delta_j-\Delta_{j+1})/t_j$.

We first prove for any $\mathbb{A}'\in V$ and $t \leq \mathcal{T}$, it holds that
\begin{equation}
\label{eq:key}
Q(\mathbb{A}'(t)) \leq Q_{j} +t \cdot s_{j}.
\end{equation}
By monotonicity in Fact~\ref{fact:sub}, we have $Q(\mathbb{A}'(t)) \leq Q(\mathbb{A}_j \oplus \mathbb{A}'(t))$, while by Lemma~\ref{lem:incre}, it further holds that $[Q(\mathbb{A}_j \oplus \mathbb{A}'(t)) - Q_j]/t \leq s_i$.
The proof is complete.

Since $T \leq \mathcal{T}$, by Eq.~(\ref{eq:key}), we have $Q^*\leq Q_j +T \cdot s_{j}$, which implies $\Delta_j \leq T \cdot s_{j} = T\cdot [(\Delta_j-\Delta_{j+1})/t_j]$.
Rearranging this inequality gives $\Delta_{j+1} \leq \Delta_{j}(1-t_j/T)$.
Unrolling it, we have $\Delta_{m+1} \leq \Delta_1(\prod_{j=1}^{m}1-t_j/T)$.
The product series is maximized when $t_j = T/m$ for all $j$, since $\sum_{j=1}^{m}t_j = T$.
Thus it holds that
\begin{equation*}
Q^*-G_{m+1} = \Delta_{m+1}\leq \Delta_1(1-\frac{1}{m})^m < \Delta_1\frac{1}{e}\leq Q^*\frac{1}{e}.
\end{equation*}
Thus $G_{m+1}=Q(\mathbb{A}) \geq (1-\frac{1}{e})Q^*$, as claimed.
\end{proof}

\begin{theorem}
\label{the:C}
When $\mathcal{T} \rightarrow \infty$, $C(\mathbb{A})\leq  4 \min_{\mathbb{A}' \in V}C(\mathbb{A}')$, i.e., $\mathbb{A}$ achieves $4$-approximation for the optimal expected iteration steps to achieve the optimal success rate. 
\end{theorem}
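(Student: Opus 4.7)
The plan is to combine the pointwise $(1-1/e)$ approximation implicit in Theorem~\ref{the:Q} with a dyadic level-comparison in the spirit of Feige-Lov\'asz-Tetali's $4$-approximation for min-sum set cover. First, I would strengthen Theorem~\ref{the:Q} to every greedy iteration boundary: for every $t$ reached by AutoAE, $Q(\mathbb{A}(t)) \geq (1-1/e)\,\max_{\mathbb{A}'\in V} Q(\mathbb{A}'(t))$. This follows by re-reading the telescoping/exponential-decay argument in Theorem~\ref{the:Q}'s proof with the active budget set to the current prefix length instead of the terminal $T$, which leaves Eq.~(\ref{eq:key}) unchanged.

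Under $\mathcal{T}\to\infty$ the greedy output $\mathbb{A}$ satisfies $Q(\mathbb{A})=1$, and any $C$-minimizer $\mathbb{A}^*$ must also reach $Q=1$ (else $C(\mathbb{A}^*)=\infty$ and the bound is vacuous). Write $U(t) = 1-Q(\mathbb{A}(t))$ and $U^*(t)=1-Q(\mathbb{A}^*(t))$, and introduce the dyadic level-crossing times $\ell_k = \min\{t : U(t) \leq 2^{-k}\}$ and $\ell^*_k = \min\{t : U^*(t) \leq 2^{-k}\}$. Splitting $C(\mathbb{A}) = \sum_{t\ge 0} U(t)$ across the intervals $[\ell_k, \ell_{k+1})$ and applying Abel summation gives $C(\mathbb{A}) \leq \sum_{k\ge 1} \ell_k/2^k$; the same split for OPT, using that $U^*(t) > 2^{-(k+1)}$ on $[\ell^*_k, \ell^*_{k+1})$, yields the matching lower bound $C(\mathbb{A}^*) \geq \tfrac12 \sum_{k\ge 1} \ell^*_k/2^k$.

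The heart of the proof is the level-by-level comparison $\ell_k \leq 2\,\ell^*_k$. The plan is to apply the strengthened per-boundary bound at $t=\ell^*_k$, obtaining $U(\ell^*_k)\leq 1/e + (1-1/e)\cdot 2^{-k}$, and then invoke Fact~\ref{fact:sub} to transfer the $(1-1/e)$ guarantee to the residual uncovered mass, so that a second round of $\ell^*_k$ iterations, measured against the same competitor $\mathbb{A}^*(\ell^*_k)$, drives $U$ below $2^{-k}$. Combining everything gives $C(\mathbb{A}) \leq \sum_k \ell_k/2^k \leq 2 \sum_k \ell^*_k/2^k \leq 4\,C(\mathbb{A}^*)$, as claimed.

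The main obstacle is precisely this level-doubling step: a single invocation of the per-boundary bound against $\mathbb{A}^*(\ell^*_k)$ only controls $U$ down to $\approx 1/e$, regardless of $k$, so the residual-submodularity argument has to be executed sharply to compound the decay into a uniform factor-of-two slowdown relative to OPT at every dyadic level. Without that sharpness, the same outline would only yield an $O(\log(1/\min U^*))$-type approximation rather than the constant $4$. The assumption $\mathcal{T}\to\infty$ is used precisely to ensure greedy runs long enough to cross every $\ell_k$, so that the dyadic decomposition and the level inequalities are both well-defined.
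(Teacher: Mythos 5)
First, a framing note: the paper itself does not write out a proof of Theorem~\ref{the:C}; it defers to \citet[Theorem~7]{streeter2007online}, whose argument is the Feige--Lov\'asz--Tetali rectangle-fitting proof for min-sum set cover. Your opening moves are fine: the per-prefix strengthening of Theorem~\ref{the:Q} is exactly what Eq.~(\ref{eq:key}) already provides, and the two Abel-summation bounds $C(\mathbb{A})\le\sum_{k\ge1}\ell_k 2^{-k}$ and $C(\mathbb{A}^*)\ge\tfrac12\sum_{k\ge1}\ell^*_k 2^{-k}$ are correct. The gap is the step you yourself identify as the heart of the proof: the level-by-level comparison $\ell_k\le 2\,\ell^*_k$ is not merely hard to sharpen --- it is \emph{false}. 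Running Eq.~(\ref{eq:key}) against the competitor $\mathbb{A}^*(\ell^*_k)$ gives the recursion $U_{j+1}-2^{-k}\le\bigl(U_j-2^{-k}\bigr)\bigl(1-t_j/\ell^*_k\bigr)$, which needs elapsed time $\Theta(k)\cdot\ell^*_k$, not $2\ell^*_k$, to push $U$ below $2^{-k}$, and this is tight. Concretely, embed the classical tight instance for greedy set cover: unit-cost actions, an optimal pair of ``rows'' each covering half the mass (so $\mathbb{A}^*$ finishes at time $2$ and $\ell^*_k=2$ for all $k\ge1$), while the greedy rule picks ``columns'' of geometrically decreasing coverage, giving $U(t)=2^{-t}$ and hence $\ell_k=k$. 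Then $\ell_k>2\ell^*_k$ for every $k\ge 5$, so no execution of the residual-submodularity step, however sharp, can rescue the per-level inequality. (The theorem survives on this instance, $C(\mathbb{A})=2\le 4\cdot 1.5$, precisely because the deep levels carry exponentially small weight; but your decomposition needs the per-level bound uniformly in $k$ to extract the constant.)

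The cited proof avoids this by never comparing level-crossing times at fixed dyadic values; the factor of two is applied once in the value axis and once in an \emph{adaptively chosen} time axis. Writing $R_j$ for the residual $1-Q_j$ at the start of greedy step $j$ and $s_j$ for the rate selected in line~3, Eq.~(\ref{eq:key}) gives $1-Q\bigl(\mathbb{A}^*(t)\bigr)\ge R_j-t\,s_j$ for every $t$; evaluating at $t=R_j/(2s_j)$ shows the optimal schedule still leaves mass at least $R_j/2$ uncovered at that time, so the rectangle $\left[0,\,R_j/(2s_j)\right]\times\left[0,\,R_j/2\right]$ lies under the curve $1-Q(\mathbb{A}^*(\cdot))$ and $C(\mathbb{A}^*)\ge R_j^2/(4s_j)$ for every $j$. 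One then expresses $C(\mathbb{A})$ as the area of the ``price'' histogram that charges each unit of newly covered mass at step $j$ the amount $R_j/ (s_j)$ per unit, and shows that this histogram, shrunk by a factor of two in both axes, fits under the histogram of the optimal schedule's coverage times; the $2\times 2$ scaling is where the $4$ comes from. If you want a self-contained proof to append, reconstruct that argument; the dyadic outline cannot be completed as written.
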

The proof is similar to \citet[Theorem~7]{streeter2007online} and is omitted here.
The above Theorem~\ref{the:Q} and Theorem~\ref{the:C} indicate that the AEs constructed by AutoAE have performance guarantees in both quality (success rate) and complexity (iteration steps).

\NewDocumentCommand{\anote}{}{\makebox[0pt][l]{$^*$}}

\begin{table*}[tbp]
	\centering
	\scalebox{0.95}{
		\begin{tabular}{rlccccc}
			\toprule
			{\#} & \multicolumn{1}{p{20em}}{{Paper}} & {Clean} & {AutoAttack} & CAA   & {Best Known} & {AutoAE} \\
			\midrule
			\multicolumn{7}{l}{{CIFAR-10 - $l_{\infty}$ - \  $\epsilon=8/255$}} \\
			\midrule
			{1} & \citet{rebuffi2021fixing}     & {92.23} & {66.58 } & —     & \underline{66.56} & \textbf{66.53}\anote  \\
			{2} & \citet{gowal2021uncovering}     & {91.10} & {65.88 } & 65.87  &  65.87 & \textbf{65.83}\anote   \\
			{3} & \citet{rebuffi2021fixing}     & {88.50} & {64.64 } & —     &   \underline{64.58} & \textbf{64.58}   \\
			{4} & \citet{rebuffi2021fixing}     & {88.54} & {64.25 } & —     &  \underline{64.20} & \textbf{64.22}   \\
			{5} & \citet{rade2021helperbased}     & {91.47} & {62.83 } & —     &  62.83 & \textbf{62.82}\anote   \\
			{6} & \citet{gowal2021uncovering}     & {89.48} & {62.80 } & \textbf{62.77}  &  \underline{62.76} &  62.79   \\
			{7} & \citet{rade2021helperbased}     & {88.16} & {60.97 } & —     &  60.97 & \textbf{60.88}\anote   \\
			{8} & \citet{rebuffi2021fixing}     & {87.33} & {60.75 } & —     &  \underline{60.73} & \textbf{60.71}\anote   \\
			{9} & \citet{sridhar2021improving}     & {86.53} & {60.41 } & —     &  60.41 & \textbf{60.32}\anote   \\
			{10} & \citet{wu2020adversarial}    & {88.25} & {60.04 } & 60.04  &  60.04 & \textbf{60.00}\anote   \\
			{11} & \citet{sridhar2021improving}    & {89.46} & {59.66 } & —     &   59.66 & \textbf{59.58}\anote   \\
			{12} & \citet{zhang2021geometryaware}    & {89.36} & {59.64 } & —     &   59.64 & \textbf{59.18}\anote   \\
			{13} & \citet{CarmonRSDL19}    & {89.69} & {59.53 } &  \textbf{59.38}    &   59.38 & \textbf{59.38}   \\
			{14} & \citet{sehwag2021robust}    & {85.85} & {59.09 } & —     &   59.09 & \textbf{59.05}\anote   \\
			{15} & \citet{rade2021helperbased}    & {89.02} & {57.67 } & —     & 57.67 &  \textbf{57.63}\anote   \\
			{16} & \citet{gowal2021uncovering}    & {85.64} & {56.86 } & 56.83  &  \underline{56.82} &  \textbf{56.82}   \\
			{17} & \citet{cifar10-linf-17-Wang2020Improving}    & {87.50} & {56.29 } & 56.29  &  56.29 & \textbf{56.20}\anote   \\
			{18} & \citet{wu2020adversarial}    & {85.36} & {56.17 } & 56.15  &   56.15 & \textbf{56.11}\anote   \\
			{19} & \citet{cifar10-linf-19-pang2021bag}    & {86.43} & {54.39 } & 54.26  &  54.26 & \textbf{54.22}\anote   \\
			{20} & \citet{cifar10-linf-20-pang2020boosting}    & {85.14} & {53.74 } & 53.75  &  53.74 & \textbf{53.68}\anote   \\
			\midrule
			\multicolumn{7}{l}{{CIFAR-10 - $l_{2}$ - \  $\epsilon=0.5$}} \\
			\midrule
			{1} & \citet{rebuffi2021fixing}     & {95.74 } & {82.32 } & —    & 82.32 &  \textbf{82.31}\anote  \\
			{2} & \citet{gowal2021uncovering}     & {94.74 } & {80.53 } & 80.47  & 80.47 &  \textbf{80.43}\anote   \\
			{3} & \citet{rebuffi2021fixing}     & {92.41 } & {\textbf{80.42} } & —   & 80.42 &  \textbf{80.42}  \\
			{4} & \citet{rebuffi2021fixing}     & {91.79 } & {\textbf{78.80} } & —   & 78.80 &  \textbf{78.80}  \\
			{5} & \citet{augustin2020adversarial}     & {93.96 } & {78.79 } & —     &  78.79 & \textbf{78.79}  \\
			{6} & \citet{augustin2020adversarial}     & {92.23 } & {\textbf{76.25} } & —     &  76.25 & \textbf{76.25}  \\
			{7} & \citet{rade2021helperbased}     & {90.57 } & {\textbf{76.15} } & —     &  76.15 & \textbf{76.15}  \\
			{8} & \citet{sehwag2021robust}     & {90.31 } & {\textbf{76.12} } & —     &  76.12 & \textbf{76.12}  \\
			{9} & \citet{rebuffi2021fixing}     & {90.33 } & {\textbf{75.86} } & —     &  75.86 & \textbf{75.86}  \\
			{10} & \citet{gowal2021uncovering}    & {90.90 } & {\textbf{74.50} } & \textbf{74.50}  &  74.50  & \textbf{74.50}   \\
			{11} & \citet{wu2020adversarial}    & {88.51 } & {\textbf{73.66} } & \textbf{73.66}  &  73.66  & \textbf{73.66}   \\
			{12} & \citet{sehwag2021robust}    & {89.52 } & {73.39 } & —     &  73.39  & \textbf{73.38}\anote   \\
			{13} & \citet{augustin2020adversarial}    & {91.08 } & {\textbf{72.91} } & 72.93  &  72.91  & \textbf{72.91}   \\
			{14} & \citet{cifar10-l2-14-robustness}    & {90.83 } & {\textbf{69.24} } & —   & 69.24 &  \textbf{69.24}  \\
			{15} & \citet{RiceWK20}    & {88.67 } & {67.68 } & —  & 67.68 &  \textbf{64.40}\anote   \\
			\midrule
			\multicolumn{7}{l}{{CIFAR-100 - $l_{\infty}$ - \  $\epsilon=8/255$}} \\
			\midrule
			{1} & \citet{gowal2021uncovering}     & {69.15 } & {36.88 } & —   & 36.88 &  \textbf{36.86}\anote  \\
			{2} & \citet{rebuffi2021fixing}     & {63.56 } & {34.64 } & —  & 34.64 &  \textbf{34.62}\anote  \\
			{3} & \citet{rebuffi2021fixing}     & {62.41 } & {32.06 } & —   & 32.06 &  \textbf{32.02}\anote  \\
			{4} & \citet{cifar100-linf-4-cui2021learnable}     & {62.55 } & {30.20 } & —   & 30.20 &  \textbf{30.13}\anote  \\
			{5} & \citet{gowal2021uncovering}     & {60.86 } & {30.03 } & —  & 30.03  &  \textbf{30.00}\anote  \\
			
			\midrule
			\multicolumn{7}{l}{{ImageNet - $l_{\infty}$ - \  $\epsilon=4/255$}} \\
			\midrule
			{1} & \citet{salman2020adversarially}     & {68.46 } & {38.14 } & —  & 38.14 &  \textbf{37.96}\anote  \\
			{2} & \citet{salman2020adversarially}     & {64.02 } & {34.96 } & —   & 34.96 &  \textbf{34.36}\anote  \\
			{3} & \citet{cifar10-l2-14-robustness}     & {62.56 } & {29.22 } & —   & 29.22 &  \textbf{28.86}\anote  \\
			{4} & \citet{imagenet-linf-4-wong2020fast}     & {55.62 } & {26.24 } & —  & 26.24 &  \textbf{24.80}\anote   \\
			{5} & \citet{salman2020adversarially}     & {52.92 } & {25.32 } & —   & 25.32 &  \textbf{25.00}\anote  \\
			\bottomrule
	\end{tabular}}
	\caption{Robustness evaluation of the top defenses listed on RobustBench Leaderboard, by AutoAttack, CAA (if available) and AutoAE, in terms of test robust accuracy (\%).
	The clean test accuracy and the best known robust accuracy are also listed.
	For each defense model, the strongest AE is indicated in \textbf{bold}.
	A best known robust accuracy is underlined (``\_'') if it is not obtained by either AutoAttack or CAA, but some other attacks.
	The robust accuracy of AutoAE is marked with ``*''  if it provides lower (better) robust accuracy than the best known one.
	Note that CAA builds an attack policy for each defense separately, while for AutoAttack and AutoAE the AE remains the same under each attack setting.
}
	\label{tab:main_results}
\end{table*}

\section{Experiments}
\label{sec:exp}
The main goal of the experiments is to evaluate the potential of AutoAE as a standard robustness evaluation approach.
In particular, when using AutoAE to construct AEs, \textit{we are not providing it with the currently strongest defense models, but some seemingly outdated ones}.
The purpose of such a setting is to assess whether AutoAE can transfer well from weaker defenses to previously unseen and stronger ones.
This is an important characteristic for reliable robustness evaluation.

We evaluate the adversarial robustness with the $l_{\infty}$ and $l_{2}$ distance constraints of 45 top defenses on the leaderboard of RobustBench \citep{croce2021robustbench} (up to Jan 2022).
Note all these defenses have a fully deterministic forward pass; thus the results of running our AEs on them are stable (i.e., no randomness).

\subsection{Constructing AEs with AutoAE}
\label{exp_constructAE}
The candidate attack pool consists of 11 recently proposed individual attacks, including the $l_{\infty}$ and the $l_{2}$ versions of $\text{APGD}_\text{CE}$ (CE), $\text{APGD}_\text{DLR}$ (DLR), $\text{FAB}$, Carlini \& Wagner Attack (CW) \citep{Carlini017} and MultiTargeted Attack (MT) \citep{Gowal2019}, as well as the $l_2$ attack Decoupled Direction and Norm (DDN) \citep{RonyHOASG19}.
We use the implementations of APGD attack and FAB attack from the code repository of AutoAttack.
For MT attack, CW attack, and DDN attack, we use the implementations from the repository of CAA.
Following AutoAttack \citep{Croce020a}, for attacks that require step size parameter, we use the adaptive versions of them that are step size free.
We allow each attack with only one restart, and set the AE's maximum total iteration steps to 1000, forcing AutoAE to construct AEs with low computational complexity (see Figure~\ref{fig:comparison}).
Besides, we discretize the range of iteration steps of these attacks into 8 uniform-spacing values to reduce the computational costs of AutoAE (see Section~\ref{sec:alg}).
Finally, we randomly select 5,000 instances from the training set of CIFAR-10 as the annotated training set, and use two CIFAR-10 adversarial training models from Robustness library \citep{cifar10-l2-14-robustness} to construct the AE for $l_{\infty}$ and $l_2$ attacks, respectively.
Note these two defense models only ranked 49 and 14 on their corresponding leaderbords.

\subsection{Baselines and Defense Models}
We compare AutoAE with the other AEs, i.e., AutoAttack and CAA.
The former is constructed by human experts while the latter is constructed automatically (like ours).
Both of them can achieve generally better robustness evaluation than individual attacks.
With Robustbench as the testbed, we use the top 15 defenses on the CIFAR-10 ($l_{\infty},\epsilon=8/255$) and the CIFAR-10 ($l_{2},\epsilon=0.5$) leaderboards, and the top 5 defenses on the CIFAR-100 ($l_{\infty},\epsilon=8/255$) and the ImageNet ($l_{\infty},\epsilon=4/255$) learderboards.
Following the guidelines of RobustBench, we use its integrated interface to test our AEs, where for CIFAR-10 and CIRAR-100 all the 10,000 test instances are used, while for ImageNet a fixed set of 5,000 test instances are used.
Note the parameter $\epsilon$ of the attacks in our AEs are always set in line with the defense being attacked.
Since the leaderboards also list the robust accuracy assessed by AutoAttack and the best known accuracy, we directly compare them with the results achieved by our AEs.
Besides, CAA \citep{MaoCWSHX21} has been shown to achieve its best performance when building a separate attack policy for each defense model, we collect such results for comparison, leading to five more defenses used in our experiments.
Note for some defenses, CAA has not been applied to them, in which case its robust accuracy is unavailable.

\begin{figure}[tbp]
	\centering
	\begin{subfigure}[b]{1.0\columnwidth}
		\centering
		\scalebox{1.0}{\includegraphics[width=\linewidth]{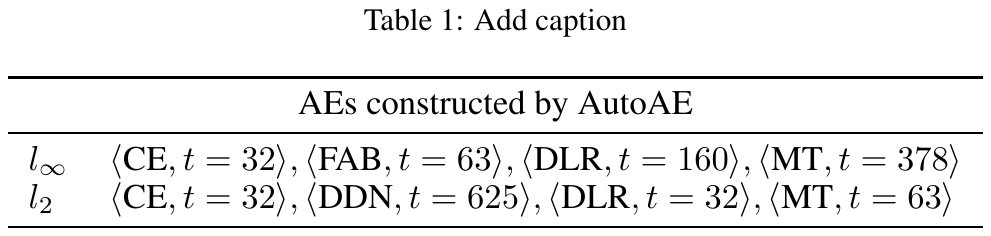}}
		\label{fig:AEs}
	\end{subfigure}
	\hfill
	\begin{subfigure}[b]{0.49\columnwidth}
		\centering
		\scalebox{1.0}{\includegraphics[width=\linewidth]{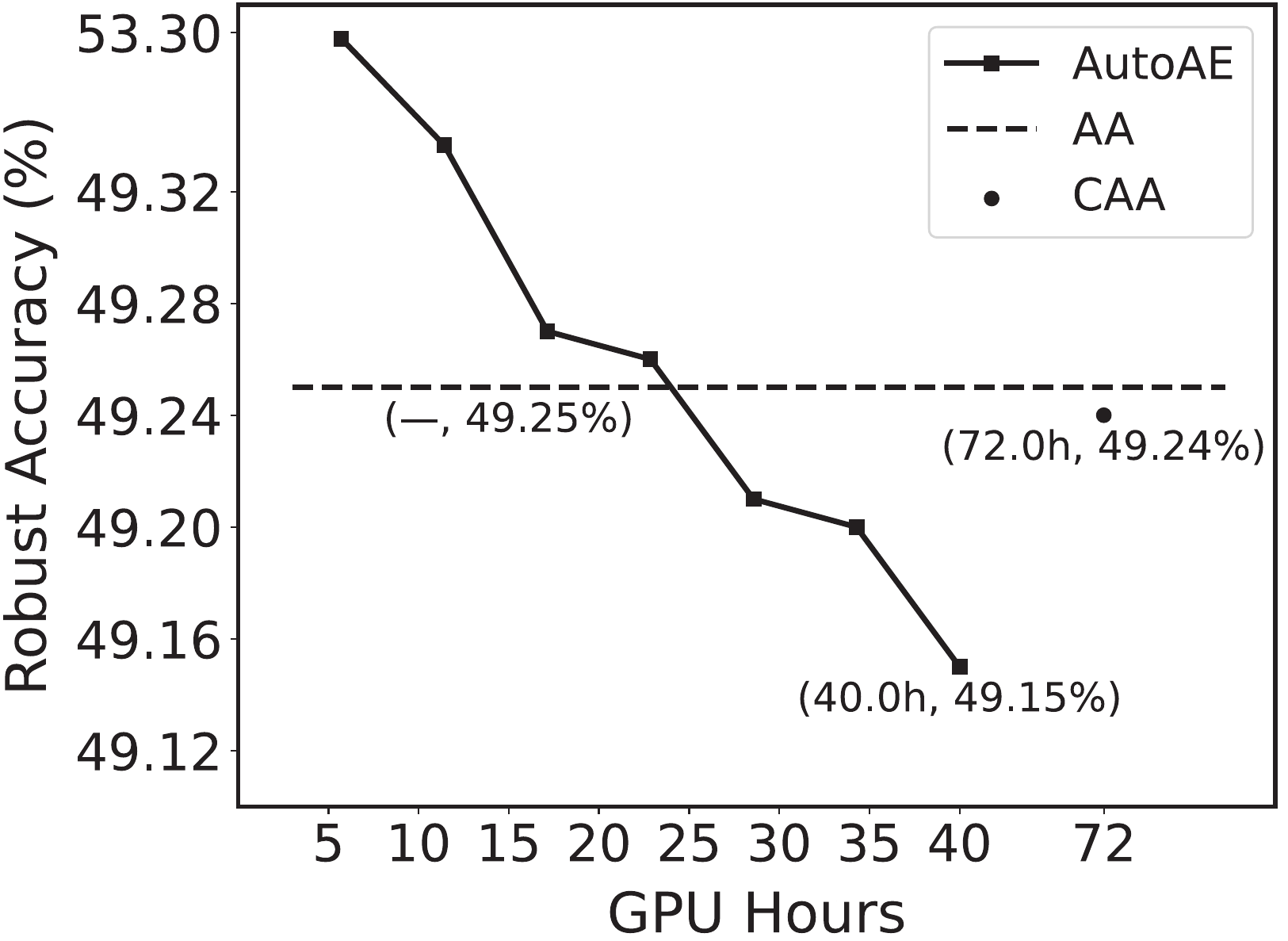}}
	\end{subfigure}
	\hfill
	\begin{subfigure}[b]{0.49\columnwidth}
		\centering
		\scalebox{1.0}{\includegraphics[width=\linewidth]{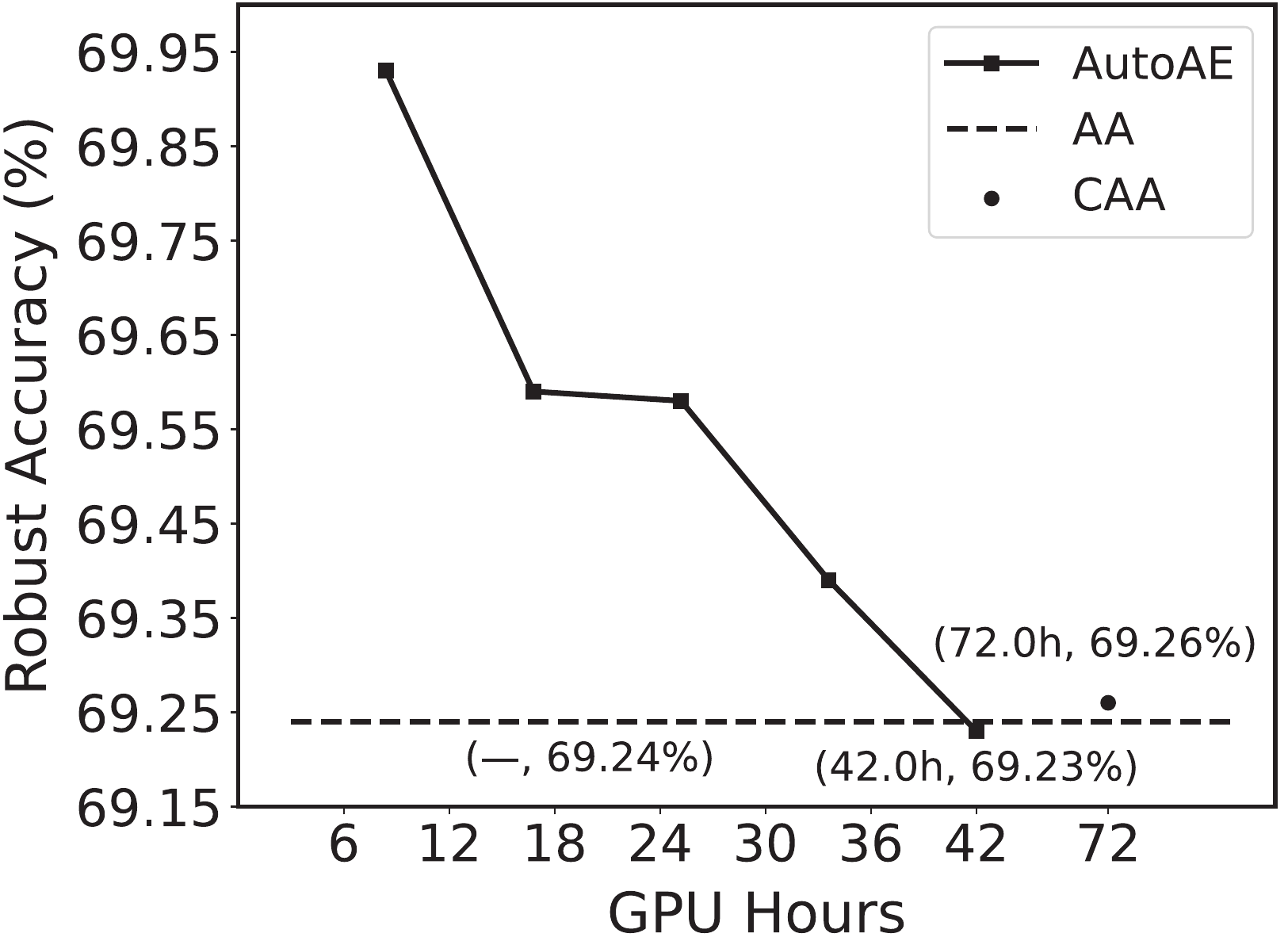}}
	\end{subfigure}
	\caption{Visualization of the two constructed AEs (\textbf{upper}), and their \textit{test} robust accuracy progress after each iteration of AutoAE (\textbf{lower}), on attacking $l_{\infty}$ (\textbf{left}) and $l_{2}$ (\textbf{right}) CIFAR-10 adversarial training models.
		The results of AutoAttack (a horizontal line indicating its test robust accuracy) and CAA (a point indicating its test robust accuracy and GPU hours used to search for the attack policy) are also illustrated.}
	\label{fig:per_progress}
\end{figure}

\subsection{Analysis of the Constructed AEs}
\label{exp_constructedAE}
Figure~\ref{fig:per_progress} presents the two constructed AEs and the progress of their \textit{test} robust accuracy along the iterations of AutoAE.
The results of AutoAttack and CAA are also illustrated.
For visualization, the GPU hours consumed by AutoAE to collect the candidate attacks' performance data are distributed evenly over all iterations.
The first observation from Figure~\ref{fig:per_progress} is that both our AEs outperform the two baselines, and their test performance improves monotonically from one iteration to the next, which accords to the monotonicity in Fact~\ref{fact:sub}.
Actually, AutoAE iterates for 7 times and 5 times in $l_\infty$ and $l_2$ scenarios, respectively, and then shrink (see Section~\ref{sec:alg}) the sizes of the AEs to 4 and 3, respectively.

In terms of efficiency, AutoAE generally only needs to consume around half GPU hours of CAA, making the former the more efficient approach for AE construction.
One may observe that for both AEs, AutoAE chooses the three strong attacks (CE, DLR, and MT) that use different losses, which implies a combination of diverse attacks is preferred by it. 
Finally, we compare the AE for $l_\infty$ attack with the recently proposed $l_\infty$ attacks and illustrate the results in Figure~\ref{fig:comparison}, where among all the attacks, our AE achieves the best robustness evaluation with low computational complexity.

\subsection{Results and Analysis}
Table~\ref{tab:main_results} presents the main results.
Overall, AutoAE can provide consistently better robustness evaluation than the two baselines.
In all except one cases AutoAE achieves the best robustness evaluation among the AEs, which is far more than AutoAttack with 10 best ones.
Compared to CAA, in 9 out of 13 cases where it has been applied, AutoAE achieves better robustness evaluation, and in three out of the left four cases, AutoAE is as good as CAA.
Recalling that CAA builds an attack policy for each defense separately while AutoAE builds a unified AE for different defenses, it is conceivable that the performance advantage of AutoAE would be greater if using it to build a separate AE for each defense.

Notably, in 29 cases AutoAE achieves better robustness evaluation than the best known one.
Among them many defenses are ranked very highly on the leaderboard, demonstrating that our AEs can transfer well from the weaker defenses (i.e., the defenses used in the construction process) to newer and stronger ones.
In summary, the strong performance of AutoAE proves itself a reliable evaluation protocol for adversarial robustness, and also indicates the huge potential of automatic construction of AEs.

\subsection{Attack Transferability}
An evaluation approach for adversarial robustness is more desirable if it exhibits good attack transferability.
We choose three recent CIFAR-10 defenses ($l_{\infty},\epsilon=8/255$) Semi-Adv \citep{CarmonRSDL19}, LBGAT \citep{cifar100-linf-4-cui2021learnable}, and MMA \citep{DingSLH20}, to which both AutoAttack and CAA have been applied, and transfer the adversarial examples (generated based on the 10,000 test instances) between them.
As presented in Table~\ref{tab:transfer}, AutoAE consistently demonstrate the best transferability in all the transfer scenarios.

\begin{table}[tbp]
	\centering
	\scalebox{0.86}{
		\begin{tabular}{lcccc}
			\toprule
			Transfer & clean & AutoAttack    & CAA   & AutoAE \\
			\midrule
			Semi-Adv → LBGAT & 88.22 & 71.95 & 74.72 & \textbf{70.78} \\
			Semi-Adv → MMA   & 84.36 & 71.79 & 72.96 & \textbf{70.61} \\
			LBGAT → Semi-Adv & 89.69 & 76.13 & 80.23 & \textbf{74.93} \\
			LBGAT → MMA      & 84.36 & 70.11 & 71.78 & \textbf{68.53} \\
			MMA → Semi-Adv   & 89.69 & 83.58 & 84.99 & \textbf{82.52} \\
			MMA → LBGAT      & 88.22 & 80.00 & 82.28 & \textbf{78.78} \\
			\bottomrule
	\end{tabular}}
	\caption{Evaluation results on attack transferability, in terms of test accuracy (\%). ``A → B'' means transferring adversarial examples generated based on model A to model B. The clean accuracy of target model is also listed. For each transfer scenario, the best value is indicated in bold.}
	\label{tab:transfer}
\end{table}

\subsection{Constructing AEs for Specific Defenses}
Although AutoAE is proposed for general robustness evaluation, it also enables researchers to easily build their own AEs to meet specific demands.
For example, one can use AutoAE to build powerful AEs targeted at some specific defense model.
Here, we provide AutoAE with the top-1 CIFAR-10 ($l_{\infty},\epsilon=8/255$) defense model in Table~\ref{tab:main_results} \citep{rebuffi2021fixing} (the candidate attack pool and the annotated dataset are the same as before), and the constructed AE obtains a test robust accuracy of 46.3\% on this defense.
Recalling that in Table~\ref{tab:main_results} the corresponding robust accuracy is 66.53\%, such improvement implies that AutoAE can achieve even better robustness evaluation when given newer and stronger defenses.

\section{Discussion and Conclusion}
\label{sec:dis}
This work focuses on developing an easy-to-use approach for AE construction that can provide reliable robustness evaluation and also can significantly reduce the efforts of human experts.
To achieve this goal, we have proposed AutoAE, a conceptually simple approach with theoretical performance guarantees in both AE quality and complexity.

Compared with existing AE construction approach, AutoAE mainly has three advantages.
First, it has no hard-to-set parameters.
Second, it consumes fewer computation resources (measured by GPU hours) to construct high-quality AEs.
Third, it could construct AEs that transfer well from weaker defenses to previously unseen and stronger ones.
Also, compared to the fixed AE built by human experts, AutoAE enables researchers to conveniently build their own AEs to meet specific demands, e.g., achieving strong attack performance on specific defenses or incorporating the newly-proposed attacks into AEs to keep up with the field.

To conclude, let us discuss some of the limitations of our approach and possible directions for future work.
First, since AutoAE builds an AE based on candidate attacks and a defense model, it may perform not well when the candidate attacks are homogeneous such that the complementarity among them is limited, or the defense is too weak to discriminate between good and bad AEs.
Thus, it is possible to include more diverse attacks in the candidate pool to build even better AEs.
Second, currently AutoAE only involves appending a $\langle \mathcal{A},t\rangle$ pair to the ensemble, which may result in stagnation into the local optimum, due to the greedy nature of the approach.
It is natural to integrate more pair operations such as deletion of a pair and exchange of two pairs into AutoAE to help it escape from the local optimum.
Third,  as a generic framework, AutoAE can be used to construct AEs for the black-box and even decision-based settings. Also, it can be used to construct AEs in other domains such as natural language processing and speech.
Lastly, from the general perspective of problem solving, AEs fall into the umbrella of algorithm portfolio (AP) \citep{LiuT019,TangLYY21,LiuTY22}, which seeks to combine the advantages of different algorithms.
Hence, it is valuable to study how to apply the approach proposed in this work to the automatic construction of APs, and vice versa.
 



\appendix

\section{NP-Hardness of the AE Construction Problem}
The subset selection problem with general cost constraints \citep{NemhauserW78} is presented below.
\begin{definition}
Given a monotone submodular objective function $f:2^V \rightarrow R^+$, a monotone cost function $c:2^V \rightarrow R^+$, the goal is to find $X^* \subseteq V$, such that:
\begin{equation}
\label{eq:subset}
    X^* = \argmax_{X \subseteq V} f(X) \ \ s.t.\ \ c(X)\leq B.
\end{equation}
\end{definition}
The AE construction problem with the only objective of maximizing $Q$, can be transformed to the above form.
Specifically, the objective function $f$ corresponds to $Q$, 
the ground set $V$ corresponds to $W=\{\langle \mathcal{A}, t \rangle|\mathcal{A}\in\mathbb{S} \land t\in \mathbb{Z}^+ \land t\leq \mathcal{T}\}$,
the cost function $c$ for a pair $\langle \mathcal{A}, t \rangle$ is defined as $c(\langle \mathcal{A}, t \rangle)=t$,
and the budget $B$ is the maximum total iteration steps $\mathcal{T}$.

It has been shown in \citep{QianSYT17} that  for any $\epsilon>0$, achieving a $(1-\frac{1}{e}+\epsilon)$ approximation ratio for the above optimization problem is NP-hard.
Hence, the AE construction problem, which is at least as hard as the above problem, is also NP-hard.

\section{Implementation Details of the Candidate Attacks}
In the experiments, we use the implementations of APGD attack and FAB attack from the  AutoAttack repository \footnote{\url{https://github.com/fra31/auto-attack}}. 
For MT attack, CW attack and DDN attack, we use their implementations from the repository of CAA \footnote{\url{https://github.com/vtddggg/CAA}}.
We discretize the range of iteration steps of these attacks into 8 uniform-spacing values to reduce the computational
costs of AutoAE, as shown in Table~\ref{tab:discrete}.

\begin{table}[h]
	\centering
	\begin{tabular}{lc}
		\toprule
		\multicolumn{1}{c}{Attack} & Iteration Steps vector\\
		\midrule
		MT    &  $ 63 \times [1, 2, \cdots,8] $\\
		CW    &  $ 125 \times [1, 2, \cdots,8] $ \\
		CE    &  $ 32 \times [1, 2, \cdots,8] $ \\
		DLR   &  $ 32 \times [1, 2, \cdots,8] $\\
		FAB   &  $ 63 \times [1, 2, \cdots,8] $\\
		DDN   &  $ 63 \times [1, 2, \cdots,8] $\\
		\bottomrule
	\end{tabular}%
	\caption{Discretized values of the iteration steps.}
	\label{tab:discrete}%
\end{table}%

For all these attacks, the default values of their hyper-parameters are used.

\section{Results on the Entire ImageNet Test Set}
We also test the AE constructed for $l_{\infty}$ attack on the entire ImageNet test set.
The results are presented in Table~\ref{tab:imagent}.
	
\section{Constructing AEs for Specific Defenses}

\begin{table}[tbp]
	\centering
	\begin{tabular}{rlcccc}
		\toprule
		\textcolor[rgb]{ .039,  .039,  .039}{\#} & \multicolumn{1}{p{11.125em}}{\textcolor[rgb]{ .039,  .039,  .039}{Paper}} & \textcolor[rgb]{ .039,  .039,  .039}{Clean} & \cellcolor[rgb]{ .922,  .945,  .871} \textcolor[rgb]{ .039,  .039,  .039}{AutoAE} \\
		\midrule
		\multicolumn{4}{l}{\textcolor[rgb]{ .039,  .039,  .039}{ImageNet - $l_{\infty} - \epsilon =4/255$}} \\
		\midrule
		\textcolor[rgb]{ .039,  .039,  .039}{1} & \citet{salman2020adversarially}      & \textcolor[rgb]{ .039,  .039,  .039}{68.46 } & \cellcolor[rgb]{ .922,  .945,  .871} \textcolor[rgb]{ .039,  .039,  .039}{\textbf{37.5140 }} \\
		\textcolor[rgb]{ .039,  .039,  .039}{2} & \citet{salman2020adversarially}     & \textcolor[rgb]{ .039,  .039,  .039}{64.02 } & \cellcolor[rgb]{ .922,  .945,  .871} \textcolor[rgb]{ .039,  .039,  .039}{\textbf{34.5120 }} \\
		\textcolor[rgb]{ .039,  .039,  .039}{3} &  \citet{cifar10-l2-14-robustness}     & \textcolor[rgb]{ .039,  .039,  .039}{62.56 } & \cellcolor[rgb]{ .922,  .945,  .871} \textcolor[rgb]{ .039,  .039,  .039}{\textbf{28.6500 }} \\
		\textcolor[rgb]{ .039,  .039,  .039}{4} & \citet{imagenet-linf-4-wong2020fast}      & \textcolor[rgb]{ .039,  .039,  .039}{55.62 } & \cellcolor[rgb]{ .922,  .945,  .871} \textcolor[rgb]{ .039,  .039,  .039}{\textbf{24.4220 }} \\
		\textcolor[rgb]{ .039,  .039,  .039}{5} & \citet{salman2020adversarially}     & \textcolor[rgb]{ .039,  .039,  .039}{52.92 } & \cellcolor[rgb]{ .922,  .945,  .871} \textcolor[rgb]{ .039,  .039,  .039}{\textbf{24.6840 }} \\
		\bottomrule
	\end{tabular}
	\caption{The robust test accuracy (\%) achieved by the constructed AEs on the entire test set of ImageNet (containing 50,000 instances).}
	\label{tab:imagent}
\end{table}

\begin{table}[tbp]
	\centering
	\begin{tabular}{lc}
		\toprule
		\multicolumn{1}{c}{Attack} & Iteration Steps\\
		\midrule
		MT    &  128\\
		CE     &  160  \\
		CW    &  250 \\
		FAB   &  126\\
		\bottomrule
	\end{tabular}
	\caption{Component Attacks of the Constructed AE .}
	\label{tab:spcific_AE}
\end{table}%

We also use AutoAE to construct an AE for the top-1 CIFAR-10 ($l_{\infty},\epsilon=8/255$) defense model \citep{rebuffi2021fixing}  on the RobustBench Leaderboard.
The candidate attack pool and the annotated dataset are the same as before.
The constructed AE, shown in Table~\ref{tab:spcific_AE}, obtains a test robust accuracy of 46.3\% on this defense.

\bibliography{aaai23}

\end{document}